\documentclass{sig-alternate}
\usepackage{amsmath}
\usepackage{graphicx}
\usepackage{tabularx}
\usepackage{algorithm}
\usepackage{algorithmic}
\usepackage{times}
\usepackage{mdwlist}
\usepackage[sort&compress,numbers]{natbib}
\usepackage[numbers]{natbib}
\usepackage{hyperref}

\ifx\theorem\undefined
\newtheorem{theorem}{Theorem}
\fi

\ifx\lemma\undefined
\newtheorem{lemma}[theorem]{Lemma}
\fi

\bibliographystyle{plainnat}

\begin{document}

\conferenceinfo{KDD'14,} {August 24--27, 2014, New York City, New York USA.}
\CopyrightYear{2014} 
\crdata{978-1-4503-1462-6 /12/08} 
\clubpenalty=10000 
\widowpenalty = 10000

\title{The Structurally Smoothed Graphlet Kernel}

\numberofauthors{2} 

\author{ 
  \alignauthor
  Pinar Yanardag\\
  \affaddr{Department of Computer Science}\\
  \affaddr{Purdue, West Lafayette IN}\\
  \email{ypinar@purdue.edu}
  % 2nd. author
  \alignauthor
  S.V.\,N. Vishwanathan\\
  \affaddr{Dept.\ of Statistics and Computer Science}\\
  \affaddr{Purdue, West Lafayette IN}\\
  \email{vishy@stat.purdue.edu}
}
\date{\today}

\maketitle

\begin{abstract} 
  A commonly used paradigm for representing graphs is to use a vector
  that contains normalized frequencies of occurrence of certain motifs
  or sub-graphs. This vector representation can be used in a variety of
  applications, such as, for computing similarity between graphs. The
  graphlet kernel of \citet{SheVisPetMehetal09} uses induced sub-graphs
  of $k$ nodes (christened as graphlets by \citet{Przulj06}) as motifs
  in the vector representation, and computes the kernel via a dot
  product between these vectors. One can easily show that this is a
  valid kernel between graphs. However, such a vector representation
  suffers from a few drawbacks. As $k$ becomes larger we encounter the
  sparsity problem; most higher order graphlets will not occur in a
  given graph. This leads to diagonal dominance, that is, a given graph
  is similar to itself but not to any other graph in the dataset.  On
  the other hand, since lower order graphlets tend to be more numerous,
  using lower values of $k$ does not provide enough discrimination
  ability. We propose a smoothing technique to tackle the above
  problems. Our method is based on a novel extension of Kneser-Ney and
  Pitman-Yor smoothing techniques from natural language processing to
  graphs. We use the relationships between lower order and higher order
  graphlets in order to derive our method. Consequently, our smoothing
  algorithm not only respects the dependency between sub-graphs but also
  tackles the diagonal dominance problem by distributing the probability
  mass across graphlets. In our experiments, the smoothed graphlet
  kernel outperforms graph kernels based on raw frequency counts.

\end{abstract} 

\section{Introduction}
\label{submission}

In this paper, we are interested in comparing graphs by computing a
kernel between graphs \citep{VisSchKonBor10}. Graph kernels are popular
because many datasets from diverse domains such as bio-informatics
\citep{ShaIde06,BorOngSchVisetal05}, chemo-informatics \citep{BonRou91},
and web data mining \citep{WasMot03} naturally can be represented as
graphs. Almost all graph kernels (implicitly or explicitly) represent a
graph as a (normalized or un-normalized) vector which contains the
frequency of occurrence of motifs or sub-graphs\footnote{The kernels
  proposed by \cite{KonBor08} are a notable exception.}. The key idea
here is that well chosen motifs can capture the semantics of the graph
structure while being computationally tractable. For instance, counting
walks in a graph leads to the random walk graph kernel of
\citet{BorOngSchVisetal05} (see \citet{VisSchKonBor10} for an efficient
algorithm for computing this kernel). Other popular motifs include
subtrees \citep{SheBor10}, shortest paths \citep{BorKri05}, and cycles
\citep{HorGaeWro04}. Of particular interest to us are the graphlet
kernels of \citet{SheVisPetMehetal09}. The motif used in this kernel is
the set of unique sub-graphs of size $k$, which were christened as
graphlets by \citet{Przulj06}.

\begin{description}
\item[Observation 1: ] Computing meaningful graphlet kernels that have
  high discriminative ability requires a careful selection of $k$. If
  $k$ is small, then the number of unique graphlets is small (i.e., the
  length of the feature vector is small). See Figure
  \ref{fig:unique-graphlets}. Consequently the feature vector does not
  provide meaningful discrimination between two graphs. On the other
  hand, if $k$ is large then a) the set of unique graphlets grows
  exponentially (i.e., the feature vector is very high dimensional) but
  b) only a small number of unique graphlets will be observed in a given
  graph (i.e., the feature vector is very sparse). Moreover, the
  probability that two graphs will contain a given large sub-graph is
  very small. Consequently, a graph is similar to itself but not to any
  other graph in the training data. This is well known as the diagonal
  dominance problem in the machine learning community
  \citep{KanGraSha03}, and the resulting kernel matrix is close to the
  identity matrix. In other words, the graphs are orthogonal to each
  other in the feature space. However, it is desirable to use large
  values of $k$ in order to gain better discriminative ability. One way
  to circumvent the diagonal dominance problem is to view the normalized
  graphlet-frequency vector as estimating a multinomial distribution,
  and use smoothing.
\item[Observation 2:] The normalized graphlet-frequency vector exhibits
  power-law behavior, especially for large values of $k$. In other
  words, a few popular graphlets occur very frequently while a vast
  majority of graphlets will occur very rarely. Put another way, a few
  graphlets dominate the distribution. To see this, we randomly sampled
  a graph from six benchmark datasets (details of the datasets can be
  found in Section~\ref{sec:Experiments}), and exhaustively computed
  occurrences of all graphlets of size $k=8$ and plotted the resulting
  histogram on a log-log scale in Figure~\ref{fig:powerlaw}.  As can be
  seen, the frequencies are approximately linear in the log-log scale
  which indicates power law behavior. Therefore, any smoothing technique
  that we use on the normalized graphlet-frequency vector must respect
  this power-law behaviour.
\item[Observation 3:] The space of graphlets is structured. What we mean
  by this is that graphlets of different sizes are related to each
  other. While many such relationships can be derived, we will work with
  perhaps the simplest one which is depicted in
  Figure~\ref{fig:dag}. Here, we construct a directed acyclic graph
  (DAG) with the following property: a node at depth $k$ denotes a
  graphlet of size $k$. Given a graphlet $g$ of size $k$ and other
  graphlet $g'$ of size $k+1$ we add an edge from $g$ to $g'$ if, and
  only if, $g$ can be obtained from $g'$ by deleting a node of $g'$.
  This shows that graphlets of size $k$ have a strong relationship to
  graphlets of size $k+1$ and one must respect this relationship when
  deriving a smoothing technique.
\end{description}

\begin{figure}
  \centering
  \centerline{\includegraphics[width=1.0\columnwidth]{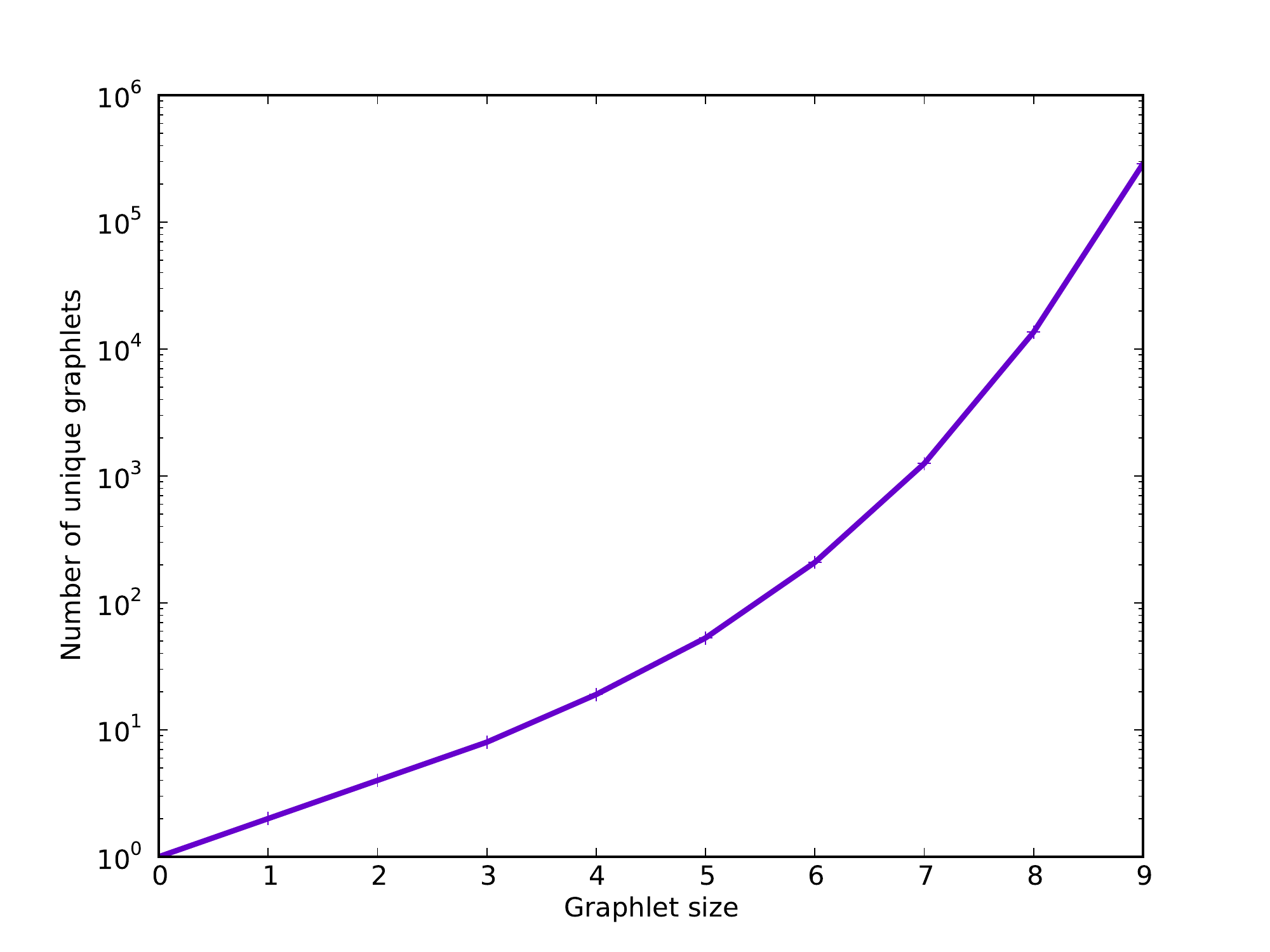}}
  \caption{Number of unique graphlets increase exponentially with
    graphlet size $k$. }
  \label{fig:unique-graphlets}
\end{figure}

\begin{figure}
  \centering
  \centerline{\includegraphics[width=1.0\columnwidth]{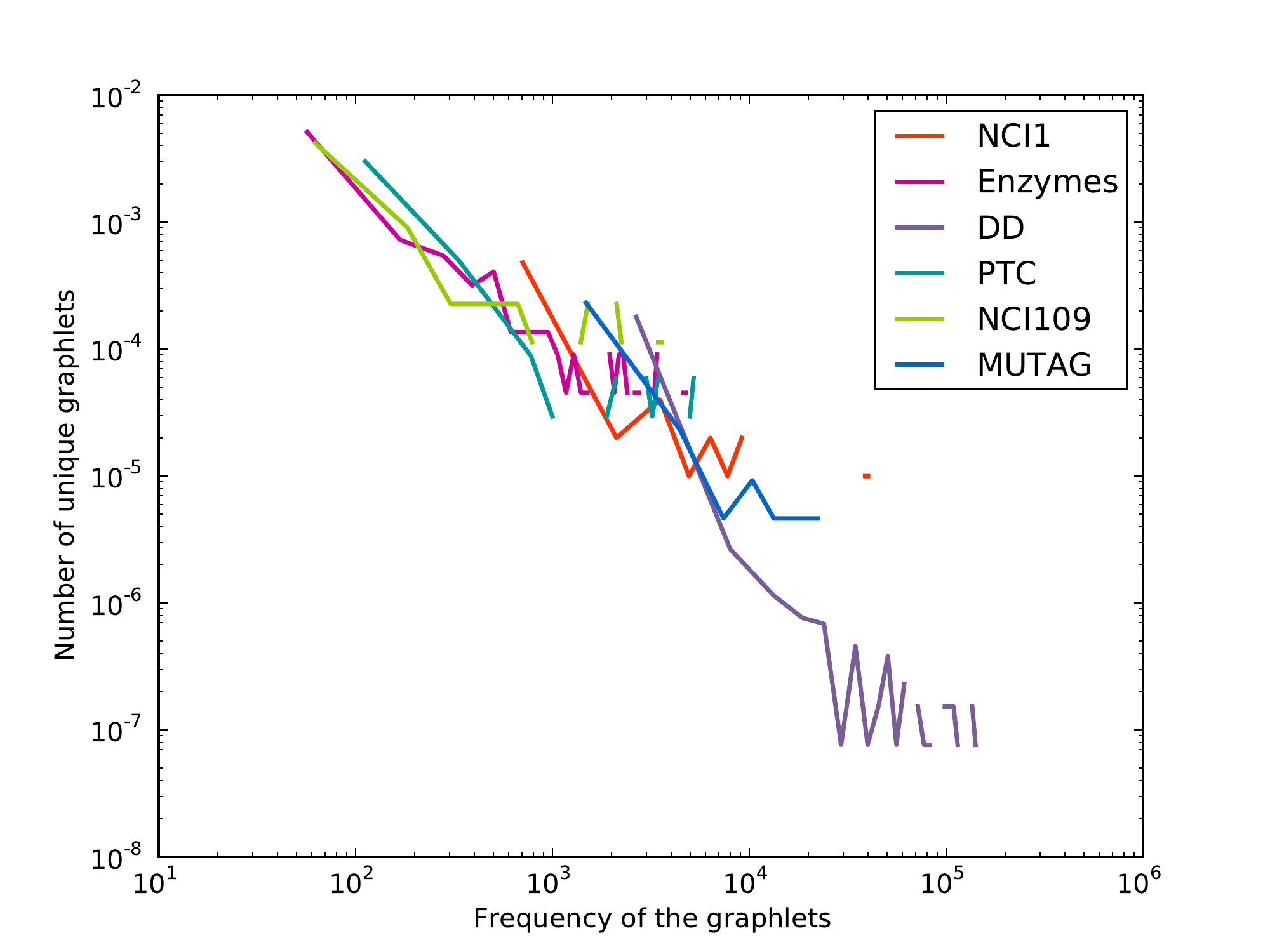}}
  \caption{We randomly select a graph from six benchmark graph datasets
    and exhaustively searched for all graphlets of size $k=8$ . The
    histogram is plotted in log-log scale in order to demonstrate the
    power-law behaviour. }
  \label{fig:powerlaw}
\end{figure}

Our contributions in this paper are as follows. First, we propose a new
smoothing technique for graphlets which is inspired by Kneser-Ney
smoothing~\citep{KneNey95} used for language models in natural language
processing. Our model satisfies the desiderata that we outlined above,
that is, it respects the power law behavior of the counts and yet takes
into account the structure of the space of graphlets. Second, we provide
a novel Bayesian version of our model that is extended from the
Hierarchical Pitman-Yor process of \citet{Teh06b}. Unlike the
traditional Hierarchical Pitman-Yor Process (HPYP) where the base
distribution is given by another Pitman-Yor Process (PYP), in our case
it is given by a transformation of a PYP that is guided by the structure
of the space. Third, we perform experiments to validate and understand
how smoothing affects the performance of graphlet kernels.

% svnvish: BUGBUG
% refs are wrong
The structure of the paper is as follows. In Section
\ref{sec:Background}, we discuss background on graphlet kernels and
smoothing techniques. In Section \ref{sec:Definbasedistr}, we introduce
our Kneser-Ney-inspired smoothing technique. In Section
\ref{sec:PitmanYor}, we propose an alternate Bayesian version of our
model. Related work is discussed in Section \ref{sec:RelatedWork}. In
Section \ref{sec:Experiments}, we perform experiments and discuss
our findings, and we conclude the paper with Section
\ref{sec:Discussion}.

\section{Background}
\label{sec:Background}

\subsection{Notation}
\label{sec:notation}
A {\em graph} is a pair $G=(V,E)$ where $V = \left \{ v_1, v_2, \ldots,
  v_{|V|} \right \}$ is an ordered set of \emph{vertices} or
\emph{nodes} and $E \subseteq V \times V$ is a set of \emph{edges}.
Given $G = (V, E)$ and $H = (V_H , E_H )$, $H$ is a {\em sub-graph} of
$G$ iff there is an injective mapping $\alpha : V_H \rightarrow V$ such
that $(v, w) \in E_H$ iff $(\alpha(v), \alpha(w)) \in E$.  Two graphs $G
= (V, E)$ and $G' = (V', E')$ are {\em isomorphic} if there exists a
bijective mapping $g: V \rightarrow V'$ such that $(v_i, v_j) \in E$ iff
$(g(v_i), g(v_j)) \in E'$. {\em Graphlets} are small, connected,
non-isomorphic sub-graphs of a large network. They were introduced by
\citet{Przulj06} to design a new measure of local structural similarity
between biological networks. Graphlets up to size five are shown in
Figure \ref{fig:graphlets}.

\begin{figure*}
  \centering
  \centerline{\includegraphics[width=2.1\columnwidth]{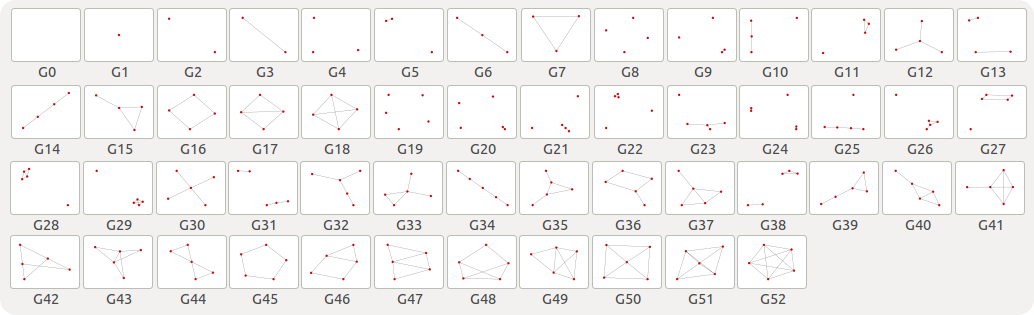}}
  \caption{ Connected, non-isomorphic induced sub-graphs of size $k\leq
    5$. Plots are generated with NetworkX library \cite{HagSwaChu08}.}
  \label{fig:graphlets}
\end{figure*}

\subsection{The Graphlet kernel}
\label{sec:gk}
Let $\mathcal{G}_{k} = \{ g_{1}, g_2, \ldots, g_{n_k} \}$ be the set of
size-$k$ graphlets where $n_k$ denotes the number of unique graphlets of
size $k$.  Given a graph $G$, we define $f_{G}$ as a normalized vector
of length $n_k$ whose $i$-th component corresponds to the frequency of
occurrence of $g_{i}$ in $G$:
\begin{align}
  \label{eq:gk}
  f_G = (\frac{c_1}{\sum_{j}^{n_k} c_j}, \cdots, \frac{c_{n_k}}{\sum_{j}^{n_k} c_j})^T.
\end{align}
Here $c_i$ denotes number of times $g_i$ occurs as a sub-graph of
$G$. Given two graphs $G$ and $G'$, the graphlet kernel $k_{g}$ is
defined as:
\begin{align}
  \label{eq:graphlet-kernel}
  k_{g}(G, G'):= f_{G}^{\top} f_{G'},
\end{align} 
which is simply the dot product between the normalized
graphlet-frequency vectors.

\subsection{Smoothing multinomial distributions}
\label{sec:smoothing}

In this section we will briefly review smoothing techniques for
multinomial distributions and show that graphlet kernels are indeed
based on estimating a multinomial. Suppose we observe a sequence $e_1,
e_2, \ldots, e_n$ containing $n$ discrete events drawn from a ground set
of size $M$ and we would like to estimate the probability $P(e_{i})$
of observing each event $e_{i}$. Maximum likelihood estimation based on
sequence counts obtained from the observations provides a way to compute
$P(e_i)$:
\begin{align}
  \label{eq:mle}
  P_{MLE}(e_i) = \frac{c_{i}}{\sum_{j} c_{j}},
\end{align}
where $c_{i}$ denotes the number of times the event $e_{i}$ appears in
the observed sequence and $\sum_{j} c_{j}$ denotes the total number
of observed events. Therefore, one can easily see that the representation
used in graphlet kernels in Section \ref{sec:gk} is
actually an MLE estimate on the observed sequences of graphlets.

However, MLE estimates of the multinomial distribution are \emph{spiky},
that is, they assign zero probability to events that did not occur in
the observed sequence. What this means is that an event with low
probability is often estimated to have zero probability mass. This issue
occurs in a number of different domains and therefore, unsurprisingly,
has received significant research attention \citep{ZhaLaf04}; smoothing
methods are typically used to address this problem. The general idea
behind smoothing is to discount the probabilities of the observed events
and to assign extra probability mass to unobserved events.

Laplace smoothing is the simplest and one of the oldest smoothing
methods, where only a fixed count of 1 is added to every event. This
results in the estimate
\begin{align}
  \label{eq:laplace}
  P_{Laplace}(e_{i}) = \frac{\sum_{j} c_{j}}{\sum_{j} c_{j} + M} P_{MLE} +
  \frac{M}{\sum_j c_j + M}\frac{1}{M}  
\end{align}
or equivalently, 
\begin{align}
  \label{eq:laplace-rewrite}
  P_{Laplace}(e_{i}) = \lambda P_{MLE}(e_{i}) + (1-\lambda)\frac{1}{M},
\end{align}
where $\lambda$ is a normalization factor which ensures that the
distributions sum to one.  The intuition behind Laplace smoothing is
basically to interpolate a uniform distribution with the MLE
distribution. Although Laplace smoothing resolves the zero-count problem,
it does not produce a power law distribution which is a desirable
feature in real-life models. Therefore, researchers have worked on
finding smoothing techniques that respect power law behavior. The key
idea behind these methods is to redistribute the probability mass using
a so-called \emph{fallback} model, where the fallback model is also
recursively estimated. 

Kneser-Ney smoothing is a fallback based smoothing method which has been
identified as the state-of-the-art smoothing in natural language
processing by several studies \citep{GooChe96}. Kneser-Ney smoothing 
computes the probability of an event by using the raw counts that are
discounted by using a fixed mass. Then, the discounted mass is
re-added equally to all event probabilities by using a base
distribution: 
\begin{align}
  \label{eq:kn}
  P_{KN}(e_i) = \frac{max \{ c_i -d, 0\}}{\sum_{j} c_j} + \sum_{j=1}^n
  |\{e_j: c_j > d \}|\frac{d}{\sum_{j} c_j} P_{0}(e_i),
\end{align}
where $d \geq 0 $ is the discounting parameter, $P_{0}(\cdot)$ is the
base distribution and $P_0(e_i)$ denotes the probability mass the base 
distribution assigned to event $e_i$.  The quantity $\sum_{j=1}^n
|\{e_j: c_j > d \}|$ is a normalization factor to ensure that the
distribution sums to 1 and simply denotes the number of events the
discount is applied. When discount parameter $d=0$, we recover MLE
estimation since no mass is taken away from any event. When $d$ is very
large, then we recover the base distribution on the events since we
discount all the available mass. One should interpolate between these
two extremes in order to get a reasonable smoothed estimate. In order to
propose a new Kneser-Ney-based smoothing framework, one needs to specify
the discount parameter $d$ and the base distribution $P_{0}(\cdot)$. In
the next section we will show how one can derive a meaningful base
distribution for graphlets.

\section{Defining a base distribution}
\label{sec:Definbasedistr}

\begin{figure*}
  \centering
  \centerline{\includegraphics[width=2.1\columnwidth]{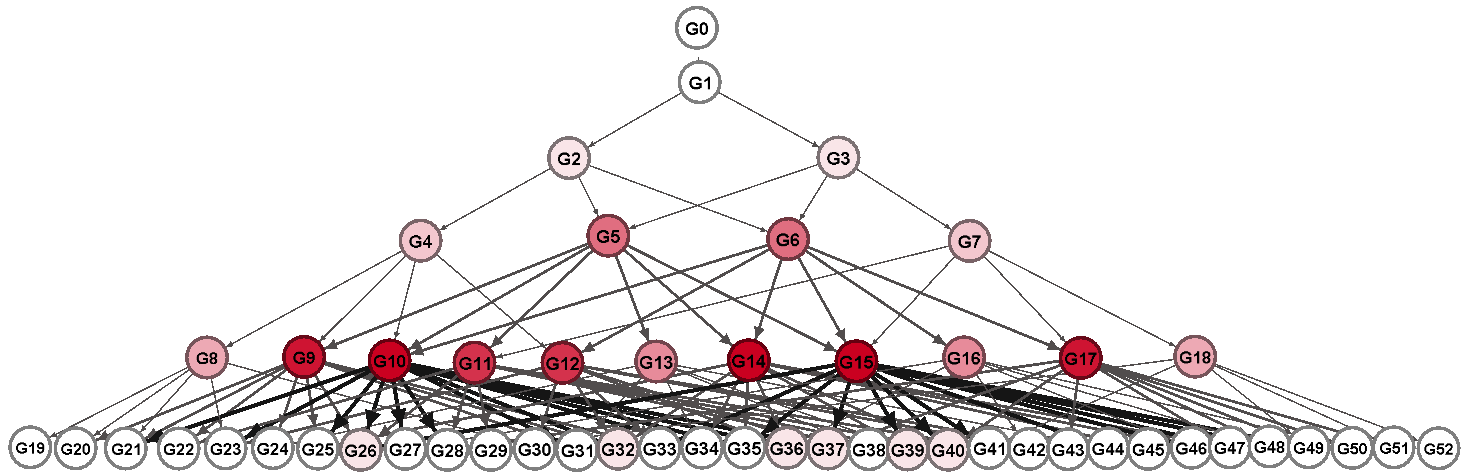}}
  \caption{Topologically sorted graphlet DAG where each node are within
    one delete-distance away. Nodes are colored by average degree. Image
    is generated by Gephi \cite{BasHeyJac09}.} 
  \label{fig:dag}
  \vskip -0.2in
\end{figure*}

The space of graphlets has an inherent structure. One can construct a
directed acyclic graph (DAG) in order to show how different graphlets
are related with each other. A node at depth $k$ denotes a graphlet of
size $k$. When it is clear from context, we will use the node of the DAG
and a graphlet interchangeably. Given a graphlet $g_i$ of size $k$ and
another graphlet $g_j$ of size $k+1$ we add an edge from $g_i$ to $g_j$
if, and only if, $g_i$ can be obtained from $g_j$ by deleting a node of
$g_j$. We first discuss how to construct this DAG and then discuss how
we can use this DAG to define a base distribution.

The first step towards constructing our DAG is to obtain all unique
graphlet types of size $k$. Therefore, we first exhaustively generate
all possible graphs of size $k$ (this involves a one time $O(2^k)$ effort), and
use Nauty \cite{McKay07} to obtain their canonically-labelled isomorphic
representations. In order to obtain the edges of the DAG, we take a node
at depth $k+1$, which denotes a canonically-labeled isomorphic graphlet
and delete a node to obtain a size $k$ graphlet. We use Nauty to
canonically-label the size $k$ graph, which in turn allows us to link to
a node at depth $k$. By deleting each node of the $k+1$ sized graphlet
we can therefore obtain $k+1$ possible links. We repeat this for all
nodes at level $k+1$ before proceeding to level $k$.  Figure
\ref{fig:dag} shows the constructed DAG for size $k=5$ graphlets. Since
all descendants of a given graphlet at level $k$ are at level $k+1$,
a topological ordering of the vertices is possible, and hence it is
easy to see that the resulting graph is a DAG. 

\begin{figure}[ht]
  \vskip 0.2in
  \begin{center}
    \centerline{\includegraphics[width=0.8\columnwidth]{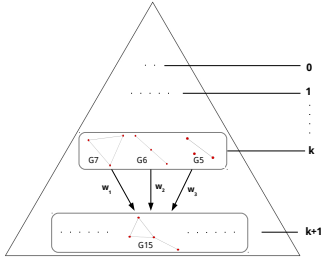}}
    \caption{Graphlet $g_{15}$ get the probability mass to its
      parents $g_{7}, g_{6}, g_{5}$ according to the
      weights $w_1, w_2, w_3$ respectively.} 
    \label{fig:triangle}
  \end{center}
\end{figure} 

Now, let us define the edge weight  between an arbitrary graphlet $g_j$
of size $k+1$ and its parent $g_i$ of size $k$. Let $s_{ij}$ denote the
number of times $g_i$ occurs as a sub-graph of $g_j$ and $\mathcal{C}_{g_{i}}$
denote all the children of graphlet $g_{i}$ in the DAG.  Then, we define
the edge weight between graphlet $g_i$ and $g_j$ as
\begin{align}
  \label{eq:weight}
  w_{ij} = \frac{s_{ij}}{\sum_{g_{j'} \in \mathcal{C}_{g_i}} s_{ij'}}
\end{align}

Next we show how the DAG can be used to define a base
distribution. Suppose we have a distribution over graphlets of size
$k$. Then we can transform it into a distribution over size $k+1$
graphlets in a recursive way by exploiting edge connections in the
DAG as follows:

\begin{align}
  \label{eq:base}
  P_{0}(g_j) = \sum_{g_i \in Pa(g_j)} w_{ij} P_0(g_i).
\end{align}
Here $g_j$ denotes a graphlet of size $k+1$ and
$Pa(g_{j})$ denotes the parents of graphlet $g_{j}$ in the DAG.
\begin{lemma}
  Given the set of child nodes of a graphlet $g_{i}$, if
  the edge weights on the DAG are all non-negative and satisfy
  \begin{align}
    \sum_{g_j \in C(g_i)} w_{ij} = 1,
  \end{align}
  then \eqref{eq:base} defines a valid probability distribution.
\end{lemma}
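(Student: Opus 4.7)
The plan is to prove this by induction on the depth $k$ of the DAG, showing that if $P_0$ restricted to level $k$ is a valid probability distribution, then the recursion \eqref{eq:base} produces a valid probability distribution on level $k+1$. Two properties need to be checked at each level: non-negativity of the masses, and the fact that they sum to one.

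Non-negativity is immediate: since $P_0(g_i) \geq 0$ by the inductive hypothesis and $w_{ij} \geq 0$ by assumption, the sum in \eqref{eq:base} is a sum of non-negative terms, hence $P_0(g_j) \geq 0$.

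For the normalization, the main step is to swap the order of summation. Writing $L_{k+1}$ for the set of graphlets at level $k+1$ and $L_k$ for the set at level $k$, I would compute
\begin{align*}
\sum_{g_j \in L_{k+1}} P_0(g_j)
  &= \sum_{g_j \in L_{k+1}} \sum_{g_i \in Pa(g_j)} w_{ij}\, P_0(g_i) \\
  &= \sum_{g_i \in L_k} \sum_{g_j \in C(g_i)} w_{ij}\, P_0(g_i),
\end{align*}
where the swap is justified by the fact that $g_i \in Pa(g_j)$ if and only if $g_j \in C(g_i)$ by the construction of the DAG. Pulling $P_0(g_i)$ out of the inner sum and applying the hypothesis $\sum_{g_j \in C(g_i)} w_{ij} = 1$ reduces this to $\sum_{g_i \in L_k} P_0(g_i)$, which equals $1$ by the inductive hypothesis.

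The only genuinely non-routine point is the base case: one must assume (or explicitly specify) a valid probability distribution $P_0$ at the root level of the DAG, e.g., the uniform distribution on size-$1$ (or size-$2$) graphlets, for the induction to start. Given that, the argument just outlined shows that \eqref{eq:base} propagates validity up the DAG to every level, so $P_0$ is a valid probability distribution on graphlets of any size $k$. I do not anticipate any real obstacle; the only subtlety to double-check is that every graphlet $g_j$ at level $k+1$ has at least one parent in the DAG (so that the recursion is well-defined and no mass is lost), which follows from the fact that deleting any node of a connected graphlet of size $k+1$ leaves a graph of size $k$ whose connected components are themselves smaller graphlets, together with the edge-construction rule of the DAG.
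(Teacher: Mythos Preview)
Your proposal is correct and follows essentially the same approach as the paper: the key step in both is to swap the order of summation over the parent–child edges of the DAG and then invoke the hypothesis $\sum_{g_j \in C(g_i)} w_{ij} = 1$ to reduce to the parent-level sum. Your write-up is in fact slightly more careful than the paper's (you explicitly verify non-negativity, frame the argument as an induction with a base case, and note that every level-$(k+1)$ graphlet must have a parent), whereas the paper presents only the single inductive step.
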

\begin{proof}
  For clarity, we introduce a few notations to facilitate the proof. Assume that there are in total $J$ child graphlets, which we denote by $g_1, g_2, \cdots, g_J$. Further assume that there are in total $I$ parent graphlets, which we denote by $g_1, \cdots, g_I$. Let $I_j$ denote the number of parents graphlet $g_j$ has, i.e. $I_j = |Pa(g_j)|$. Clearly, we have that $\sum_{j=1}^J I_j = I$. Thus, we have
  \begin{align*}
  \sum_{j=1}^J P_0(g_j) & = \sum_{j=1}^J [\sum_{i=1}^{I_j} w_{ij} P_0(g_i)] \\
  & = \sum_{i=1}^I [\sum_{\{j:g_j\in C(g_i)\}} w_{ij} P_0(g_i)] \\
  & = \sum_{i=1}^I P_0(g_i) [\sum_{\{j:g_j\in C(g_i)\}} w_{ij}]\\
  & = \sum_{i=1}^I P_0(g_i) = 1,
  \end{align*}
which completes the proof.
\end{proof}

The base distribution we defined above respects the structural space of
the graphlets. Pretend for a moment that we are given only the
frequencies of occurrences of size $k$ graphlets and are asked to infer
the probability of occurrences of size $k+1$ graphlets. Without any
additional information, one can infer the distribution as follows: each
parent graphlet casts a vote for every child graphlet based on how many
times the parent occurs in the child. The votes of all parents are
accumulated and this provides a distribution over size $k+1$
graphlets. In other words, a natural way to infer the distribution at
level $k+1$ is to use how likely we are to see its sub-graphs. Figure
\ref{fig:triangle} illustrates the relationship between a graphlet of
size $k+1$ and its parent graphlets of size $k$. Here, edge weights
denote how many times each parent occurs as a sub-graph of $g_{15}$. In
the case that we do not observe graphlet $g_{15}$, it still gets 
probability mass proportional to the edge weight from its parents $g_{7}, g_{6}, g_{5}$,
 thus overcoming the sparsity problem of unseen data. Our model
combines this base distribution with the observed real data to generates
the final distribution.

The way how the discounted mass is distributed is controlled by the edge
weights between two graphlets. In Equation \ref{eq:weight} we defined
edge weights according to the number of times a parent node occurs in
its children. However, one can explore different weighting schemes
between the nodes on the DAG based on domain knowledge. For example, in
the case of structured graphs such as social networks, one might
benefit from weighting the edges according to the PageRank
\cite{PagBriMotWin98} score of the nodes. Similarly, other link analysis
 algorithms such as Hubs or Authority given by HITS algorithm \cite{Kleinberg99} 
 can be used in order to exploit the domain knowledge.
 
\subsection{Kneser-Ney Smoothing with a structural distribution}
\label{sec:kn}

We now have all the components needed to explain our Structural
Kneser-Ney (SKN) framework. Given an arbitrary graphlet $g_j$ of size
$k+1$, we estimate the probability of observing that graphlet as
follows:

\begin{multline}
  P_{SKN}(g_j) = \frac{max(c_j -d, 0)}{\sum_{g_{j'} \in
      \mathcal{G}_{k+1}} c_{j'}} + \frac{d}{\sum_{g_{j'} \in
      \mathcal{G}_{k+1}} c_{j'}}
  \\
  \sum_{g_{j'} \in \mathcal{G}_{k+1}}|\{g_{j'}: c_{j'} > d \}| \sum_{i
    \in \mathcal{P}_{g_j}} P_{0}(g_i)\frac{w_{ij}}{\sum_{g_{j'} \in
      \mathcal{C}_{g_{i}}}w_{ij'}} 
  \label{eq:kn-graphlet}
\end{multline}
As can be seen from the equation, we first discount the count of all
graphlets by $d$, and then redistribute this mass to all other
graphlets. The amount of mass a graphlet receives is controlled by the
base distribution. In order to automatically tune the discount parameter
$d$, we use the Pitman-Yor process (a Bayesian approximation of
Kneser-Ney) in the next section.

\section{Pitman-Yor Process}
\label{sec:PitmanYor}

\begin{figure*}[ht]
  \begin{center}
    \centerline{\includegraphics[width=2.0\columnwidth]{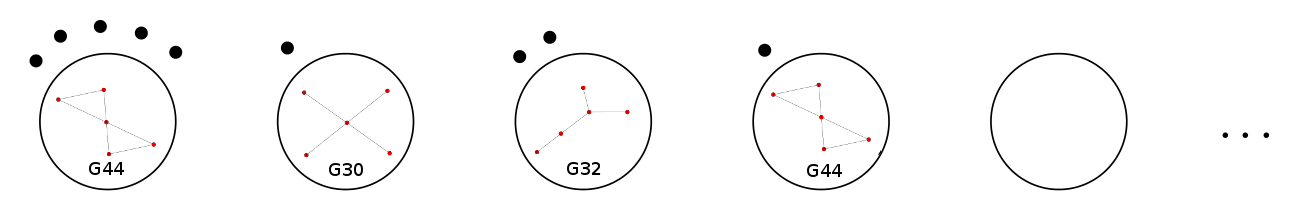}}
    \caption{An illustration of table assignment, adapted from
      \citet{GolGriJoh11}. In this example, labels at the tables are
      given by $(l_1 , \ldots, l_4 ) = (G_{44}, G_{30}, G_{32},
      G_{44})$. Black dots indicate the number of occurrences of each
      label in 10 draws from the Pitman-Yor process. }
    \label{fig:pyp}
  \end{center}
\end{figure*} 

% \citet{Teh06b} proposed a novel interpretation of interpolated
% Kneser-Ney as approximate inference in a hierarchical Bayesian model
% consisting of Pitman-Yor processes \citep{PitYor97}. Pitman-Yor
% processes are known to produce power-law distributions
% \cite{GolGriJoh06}, and therefore, we will adapt the Pitman-Yor process
% for graphlets by redefining the base distribution and modifying the
% hierarchical process.

% Let $X$ be a fixed and finite set of $V$ graphlets. For each graphlet $x
% \in X$ let $G(x)$ be the estimated probability of $x$, and let $G =
% [G(x)]_{x \in X}$ be the vector of graphlet probabilities. We place a
% Pitman-Yor process prior on $G$:

We will only give a very high level overview of a Pitman-Yor process and
refer the reader to the excellent papers by \citet{Teh06b} and
\cite{GolGriJoh06} for more details.  A Pitman-Yor process $P$ on a ground
set $\mathcal{G}_{k+1}$ of size-$(k+1)$ graphlets is defined via
\begin{align}
  \label{eq:pyp-def}
  P_{k+1} \sim PY(d_{k+1}, \theta_{k+1}, P_{k}),
\end{align}
where $d_{k+1}$ is a discount parameter $0 \leq d_{k+1} < 1$, $\theta >
-d_{k+1}$ is a strength parameter, and $P_{k}$ is a base
distribution. The most intuitive way to understand draws from the
Pitman-Yor process is via the Chinese restaurant process (also see
Figure~\ref{fig:pyp}). Consider a restaurant with an infinite number of
tables. Customers enter the restaurant one by one. The first customer
sits at the first table, and is seated at the first table. Since this
table is occupied for the first time, a graphlet is assigned to it by
drawing a sample from the base distribution. The label of the first
table is the first graphlet drawn from the Pitman-Yor
process. Subsequent customers when they enter the restaurant decide to
sit at an already occupied table with probability proportional to $c_{i}
- d_{k+1}$, where $c_{i}$ represents the number of customers already
sitting at table $i$. If they sit at an already occupied table, then the
label of that table denotes the next graphlet drawn from the Pitman-Yor
process. On the other hand, with probability $\theta_{k+1} + d_{k+1} t$,
where $t$ is the current number of occupied tables, a new customer might
decide to occupy a new table. In this case, the base distribution is
invoked to label this table with a graphlet. Intuitively the reason this
process generates power-law behavior is because popular graphlets which
are served on tables with a large number of customers have a higher
probability of attracting new customers and hence being generated
again. This self reinforcing property produces power law behavior.

In a hierarchical Pitman-Yor process, the base distribution $P_{k}$ is
recursively defined via a Pitman-Yor process $P_{k} \sim PY(d_{k},
\theta_{k}, P_{k-1})$. In order to label a table, we need a draw from
$P_{k}$, which is obtained by inserting a customer into the
corresponding restaurant. 

In our case $P_{k+1}$ is defined over $\mathcal{G}_{k+1}$ of size
$n_{k+1}$ while $P_{k}$ is defined over $\mathcal{G}_{k}$ of size $n_{k}
\leq n_{k+1}$. Therefore, like we did in the case of Kneser-Ney
smoothing we will use the DAG and Equation \eqref{eq:base} to define a
base distribution. This changes the Chinese Restaurant process as
follows: When we need to label a table, we will first draw a size-$k$
graphlet $g_{i} \sim P_{k}$ by inserting a customer into the
corresponding restaurant. Given $g_{i}$, we will draw a size-$(k+1)$
graphlet $g_{j}$ proportional to $w_{ij}$, where $w_{ij}$ is obtained
from the DAG. Deletion of a customer is handled similarly. Detailed
pseudo-code can be found in Algorithms~\ref{alg:insert} and
\ref{alg:delete}. 

\renewcommand{\algorithmicrequire}{\textbf{Input:}}
\renewcommand{\algorithmiccomment}[1]{\hspace{0.in} // #1}
\begin{algorithm}
  \begin{algorithmic}
    \REQUIRE $d_{k+1}$, $\theta_{k+1}$, $P_{k}$
    \STATE $t \leftarrow 0$  \COMMENT{Occupied tables}
    \STATE $c \leftarrow ()$ \COMMENT{Counts of customers}
    \STATE $l \leftarrow ()$ \COMMENT{Labels of tables}
    \IF{$t=0$}     
    \STATE $t \leftarrow 1$
    \STATE append 1 to $c$
    \STATE draw graphlet $g_{i} \sim P_{k}$ \COMMENT{Insert customer in
      parent}
    \STATE draw $g_{j} \sim w_{ij}$ 
    \STATE append $g_{j}$ to $l$
    \RETURN $g_{j}$
    \ELSE
    \STATE with probability $\propto \max(0, c_j - d)$
    \STATE $c_j \leftarrow c_j+1$
    \RETURN $l_j$
    \STATE with probability proportional to $\theta + d t$
    \STATE $t \leftarrow t+1$
    \STATE append 1 to $c$
    \STATE draw graphlet $g_{i} \sim P_{k}$ \COMMENT{Insert customer in
      parent}
    \STATE draw $g_{j} \sim w_{ij}$ 
    \STATE append $g_{j}$ to $l$
    \RETURN $g_{j}$
    \ENDIF
  \end{algorithmic}
  \caption{Insert a Customer}
  \label{alg:insert}
\end{algorithm}
\begin{algorithm}
  \begin{algorithmic}
    \REQUIRE $d$, $\theta$, $P_{0}$, $C$, $L$, $t$
    \STATE with probability $\propto c_l$
    \STATE $c_l \leftarrow c_l-1$
    \STATE $g_j \leftarrow l_j$
    \IF{$c_l=0$}
    \STATE $P_k$ $\propto$ $1/w_{ij}$
    \STATE delete $c_l$ from $c$
    \STATE delete $l_j$ from $l$
    \STATE $t \leftarrow t - 1$
    \ENDIF
    \RETURN $g$
  \end{algorithmic}
  \caption{Delete a Customer}
  \label{alg:delete}
\end{algorithm}

\section{Related Work}
\label{sec:RelatedWork}

The problem of estimating multinomial distributions is a classic
problem. In natural language processing they occur in the following
context: suppose we are given a sequence of words $w_{1}, \ldots, w_{k}$
and one is interested in asking what is the probability of observing
word $w$ next. Estimating this probability lies at the heart of language
models, and many sophisticated smoothing techniques have been
proposed. This is a classic multinomial estimation problem that suffers
from sparsity since the event space is unbounded. Moreover, natural
language exhibits power law behavior since the distribution tends to be
dominated by a small number of frequently occurring words.  In extensive
empirical evaluation it has been found the Kneser-Ney smoothing is very
effective for language models \cite{GooChe96}, \cite{ManRagSch08}. Here,
the base distribution is constructed using smaller context of $k-1$
words which naturally leads to a denser distribution. Even though
language models and graphlets have some similarities, there is a
significant fundamental difference between the two. In language models,
one can derive the base distribution using a smaller context. However,
in the case of graphlets there is no equivalent concept of a fallback
model. Therefore, we need to derive the base distribution by using
\textit{smaller size} graphlets. However, this leads to a problem since
the distribution is now defined on a smaller space. Therefore, we need
to apply a transformation by using the DAG in order to convert the
distribution back into to the original space. \citet{GolGriJoh11} and
\citet{Teh06b} independently showed that Kneser-Ney can be explained in
a Bayesian setting by using the Pitman-Yor Process (PYP)
\cite{PitYor97}. In the Bayesian interpretation, a hierarchical PYP
where the Pitman-Yor prior comes from another PYP is used. Similar to
Kneser-Ney, this interpretation is not directly applicable to our model
since the previous PYP has a different space, thus we need to apply a
transformation.

Graph kernels can be considered as special cases of convolutional
kernels proposed by \citet{Haussler99}. In general, graph kernels can be
categorized into three classes: graph kernels based on walks and paths
\cite{GaeFlaWro03}, \cite{KasTsuIno04},\cite{BorKri05}, graph kernels
based on limited-size sub-graphs \cite{HorGaeWro04}, \cite{SheBor10},
\cite{ShiPetDroLanetal09} and graph kernels based on subtree patterns
\cite{RamGae03}. \citet{SheSchVanMehetal11} performs a relaxation on the
vertices and exploit labeling information embedded in the graphs to
derive their so-called Weisfeiler-Lehman kernels. However, their kernel
is applicable only to labeled graphs. The sparsity problem of graphlet
kernels has been addressed before.  Hash kernels proposed by
\citet{ShiPetDroLanetal09} addresses the sparsity problem by applying a
sparse projection into a lower dimensional space. The idea here is that
many higher order graphlets will ``collide'' and therefore be mapped to
the same lower dimensional representation, thus avoiding the diagonal
dominance problem. Unfortunately, we find that in our experiments the
hash kernel is very sensitive to the hash value used for embedding and
rarely performed well as compared to the MLE estimate. 

%  in order to solve the
% diagonal dominance
% problem. % find a proper way to say that their model doesnt work

\section{Experiments}
\label{sec:Experiments}

\begin{table*}[t]
  \caption{Classification accuracy with standard deviation on benchmark
    data sets. RW: random walk kernel, SP: shortest path kernel, GK:
    graphlet kernel, HK: hash kernel, KN: Kneser-Ney smoothing, PYP:
    Pitman Yor smoothing, >24h: computation did not finish within 24
    hours. GK, KN and PYP results are reported for size 5 graphlets.} 
  \label{tab:results}
  \begin{center}
    \begin{tabular}{|c|c|c||c|c|c|c|c|}
      \hline
      Dataset & RW & SP & GK & HK & KN & PYP \\ 
      \hline  
      MUTAG &  83.51  & 78.72 & 80.34$\pm 3.0$  & 80.34$\pm 3.0$ & \textbf{82.98} $\pm 5.4$ & 81.94$\pm 2.9$\\\hline
      PTC & 51.16 & 50 & 57.26$\pm 4.6$  & 57.26$\pm 4.6$ & \textbf{59.87}$\pm 4.6$ & 56.36$\pm 4.3$\\\hline
      DD & > 24h & > 24h & 72.74$\pm 1.9$ &  72.74$\pm 1.9$& \textbf{74.95}$\pm 2.3 $& 73.51$\pm 2.8$\\\hline
      ENZYMES & 18.5 & 21.66 &19.50 $\pm 4.4$&  19.50 $\pm 4.4$ & \textbf{25.66} $\pm 1.6$ &23.33 $\pm 3.4$  \\\hline
      NCI1 & 44.84 &  63.65  & 56.56$\pm 4.5$ &  56.56$\pm 4.5$ & \textbf{ 62.40} $\pm 1.5 $& 61.60$\pm 2.6$\\\hline
      NCI109 & 59.80 & 62.44  & 62.00$\pm 4.0$ & 62.00$\pm 4.0$&\textbf{  62.15} $\pm 1.8$ & 58.32$\pm 4.6$   \\
      \hline
    \end{tabular}
  \end{center}
\end{table*} 

To compare the efficacy of our approach, we compare our Kneser-Ney and
Pitman-Yor smoothed kernels with state-of-the-art graph kernels namel
the graphlet kernel \cite{SheVisPetMehetal09}, the hash kernel
\cite{ShiPetDroLanetal09}, the random walk kernel \cite{GaeFlaWro03},
\cite{KasTsuIno04}, \cite{VisBorSch07b}, and the shortest path kernel
\cite{BorKri05}. For random walk kernel, we uniformly set the decay
factor $\lambda = 10^{-4}$, for shortest path we used the delta kernel
to compare the shortest-path distances, and for the hash kernel we used
a prime number of 11291.  We adopted Markov chain Monte Carlo sampling
based inference scheme for the hierarchical Pitman-Yor language model
from \cite{Teh06b} and modified the open source implementation of HPYP
from \url{https://github.com/redpony/cpyp}.  Due to lack of space we
will only present a subset of our experimental results. Full results
including the source code and experimental scripts will be made
available for download from \url{http://cs.purdue.edu/~ypinar/kdd}.

\subsection*{Datasets}

In order to test the efficacy of our model, we applied smoothing to
real-world benchmark datasets, namely MUTAG, PTC, NCI1, NCI109, ENZYMES
and DD. MUTAG \cite{DebLopDebShuetal91} is a binary data set of 188
mutagenic aromatic and heteroaromatic nitro compounds, labeled whether
they have mutagenicity in \textit{Salmonella typhimurium}. The
Predictive Toxicology Challenge (PTC) \cite{ToiSriKinKraetal03} dataset
is a chemical compound dataset that reports the carcinogenicity for male
and female rats. NCI1 and NCI109 \cite{WalWatKar08},
(http://pubchem.ncbi.nlm.nih.gov) datasets, made publicly available by
the National Cancer Institute (NCI), are two subsets of balanced data
sets of chemical compounds screened for ability to suppress or inhibit
the growth of a panel of human tumor cell lines. Enzymes is a data set
of protein tertiary structures obtained from
\cite{BorOngSchVisetal05b}. DD \cite{DobDoi03} is a data set of protein
structures where each protein is represented by a graph and nodes are
amino acids that are connected by an edge if they are less than 6
Angstroms apart. Table \ref{table-datasets} shows summary statistics for
these datasets. Note that we did not use edge or node labels in our
experiments.

\begin{table}[t]
  \caption{Properties of the datasets}
  \label{table-datasets}
  \begin{center}
    \begin{tabular}{|c|c|c|c|c|}
      \hline
      Dataset & Size & Classes & Avg Nodes & Avg Edges  \\
      \hline
      MUTAG & 188 & 2 (125 vs 63) & 17.9 & 39.5\\ \hline
      PTC & 344 & 2 (152 vs 192)  & 25.5  &  51.9\\ \hline
      Enzyme & 600 & 6 (100 each)  & 32.6  & 124.2\\ \hline
      DD & 1178 & 2 (691 vs 487) & 284.3  &  1431.3\\ \hline
      NCI1 & 4110 & 2 (2057 vs 2053) & 29.8 & 64.6  \\\hline
      NCI109 & 4127 & 2 (2079 vs 2048) & 29.6 &  62.2\\ 
      \hline
    \end{tabular}
  \end{center}
\end{table}

\subsection*{Experimental Setting}

All data sets we work with consist of sparse graphs. However, counting
all graphlets of size $k$ for a graph with $n$ nodes requires $O(n^k)$
effort which is intractable even for moderate values of $k$. Therefore,
we use random sampling, as advocated by \cite{SheVisPetMehetal09}, in
order to obtain an empirical distribution of graphlet counts that is
close to the actual distribution of graphlets in the graph. For each
value of for each $k \in \{2,\ldots,8\}$ we randomly sampled 10,000
sub-graphs, and used Nauty \cite{McKay07} to get canonically-labeled
isomorphic representations which are then used to construct the
frequency representation. 

We performed 5-fold cross-validation with C-Support Vector Machine
Classification using LibSVM \cite{ChaLin01b}, using 4 folds for training
and 1 for testing. We used a linear kernel. Since we are interested in
understanding the difference in performance between smoothed and
un-smoothed kernels we did not tune the value of $C$; it was simply set
to 1. In order to tune the discount parameter for Kneser-Ney based
smoothed kernel, we tried different parameters vary from 0.01 to 10,000
and report results for the best one.

\begin{figure}[ht]
  \vskip 0.2in
  \begin{center}
    \centerline{\includegraphics[width=1.0\columnwidth]{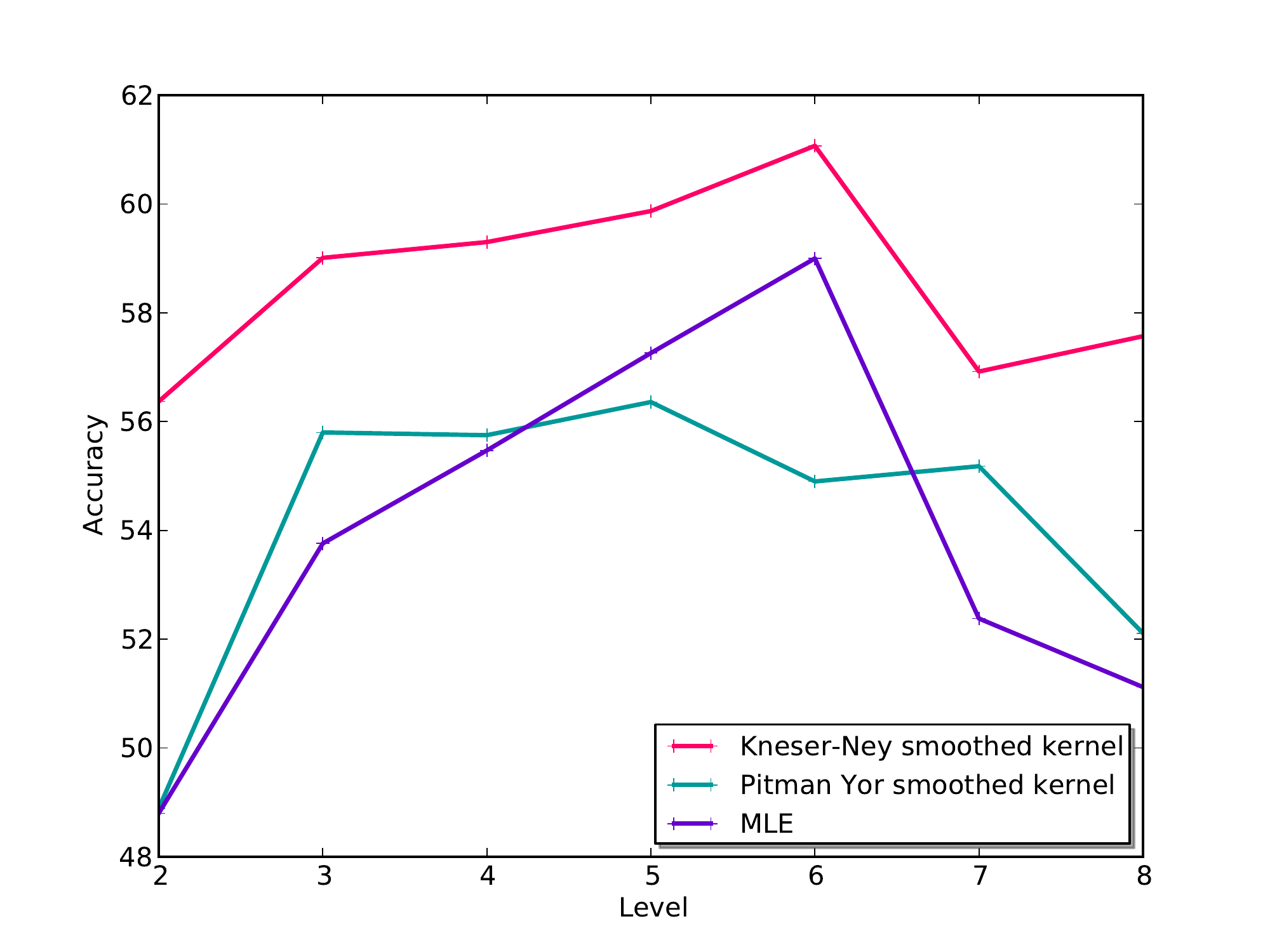}}
    \caption{
      Accuracy vs. graphlet size for PTC dataset with MLE, PYP and Kneser-Ney smoothed kernels}
    \label{fig:n-vs-acc}
  \end{center}
  \vskip -0.2in
\end{figure} 

\begin{table}[t]
  \caption{Graphlet Kernel with exhaustive sampling vs. Smoothed Kernel with 10,000 samples}
  \label{tab:results-bruteforce}
  \begin{center}
    \begin{tabular}{|c|c|c|c|l|}\hline
      Dataset & GK  & KN & PYP \\ \hline  
      MUTAG & 80.85 & \textbf{82.98} $\pm 5.4$ & 81.94$\pm 2.9$\\ \hline 
      PTC & 54.94 & \textbf{59.87}$\pm 4.6$ & 56.36$\pm 4.3$\\ \hline 
      DD & > 48h &  \textbf{74.95}$\pm 2.3 $& 73.51$\pm 2.8$\\ \hline 
      ENZYMES & > 48h & \textbf{25.66} $\pm 1.6$ &23.33 $\pm 3.4$  \\ \hline 
      NCI1 & 62.36&  \textbf{62.40 }$\pm 1.5 $& 61.60$\pm 2.6$\\ \hline 
      NCI109 &\textbf{62.53} & 62.15 $\pm 1.8$ & 58.32$\pm 4.6$  \\  
      \hline
    \end{tabular}
  \end{center}
\end{table}

\subsection*{Effect of discounting parameter on performance}

First, we investigate the effect of the discounting parameter on the
classification performance. Since the trends are similar across
different datasets, we pick PTC as a representative dataset to report
results. Figure \ref{fig:discount-vs-acc} shows the classification
accuracy on the PTC dataset with different discounting parameters for
Kneser-Ney smoothing. As expected, applying very large discounts
decreases the performance because the distribution
\eqref{eq:kn-graphlet} degrades to the base distribution.  On the other
hand, applying a very small discount also decreases the accuracy since
the distribution degrades to the MLE estimate. From our experiments, we
observe that the best performance in all datasets is achieved by using
an intermediate discount value between these two extremes. However, the
specific discount value is data dependent.

% Another way to control the discounting is specifying the base
% distribution such that the weights on the edges of the DAG will control
% how the mass is distributed across different graphlets. We pick one
% dataset and experiment how different base distributions effect the
% performance. 

\begin{figure}[ht]
  \vskip 0.2in
  \begin{center}
    \centerline{\includegraphics[width=1.0\columnwidth]{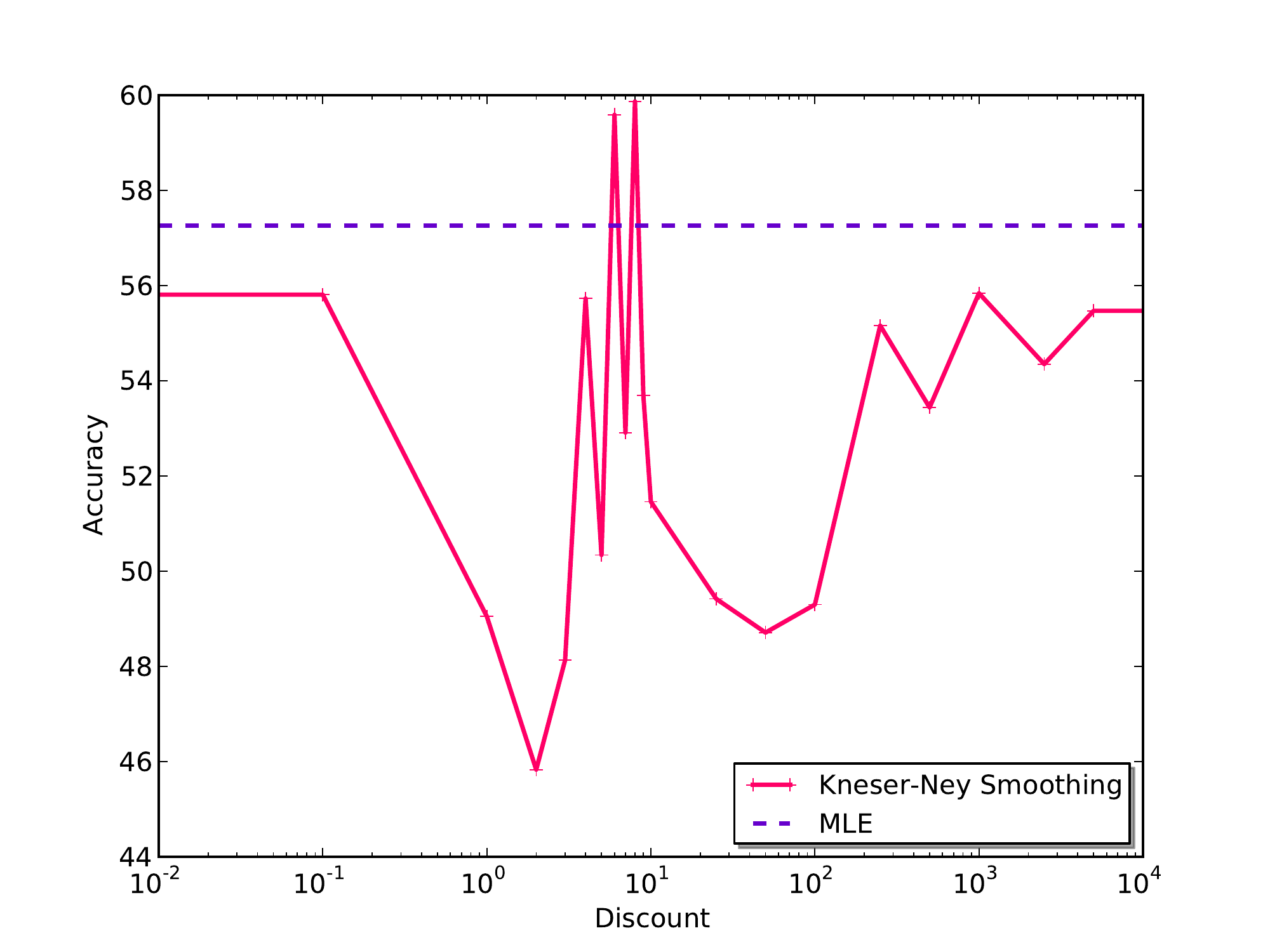}}
    \caption{
      Accuracy vs. Discounting for PTC dataset, k=5 graphlets }
    \label{fig:discount-vs-acc}
  \end{center}
  \vskip -0.2in
\end{figure} 

\subsection*{Effect of graphlet size on performance}

Next we investigate how the value of $k$ affects performance. Again, we
show results for a representative dataset namely PTC. Figure
\ref{fig:n-vs-acc} shows the classification accuracy on the PTC dataset
as a function of graphlet size $k$ for MLE (the graphlet kernel),
Pitman-Yor smoothed kernel and Kneser-Ney smoothed kernel.  From the
figure, we can see that small graphlet sizes such as $k=2,3$ do not
perform well and are not informative since the number of unique
graphlets is very small (see Table \ref{fig:unique-graphlets}). On the
other hand, MLE does not perform well for large graphlet sizes such as
$k=7,8$ because of the diagonal dominance problem.  On the other hand,
smoothed kernels in general obtain a balance between these two extreme
situations and tend to yield better performance. Pitman-Yor smoothed kernel 
tends to achieve a better performance than MLE, but doesn't perform as good 
as Kneser-Ney. This is expected since we didn't tune the hyperparameters 
for Pitman-Yor process. \citet{Teh06b} shows that Pitman-Yor yields a better 
performance if one tune the hyperparameters. Therefore, our Pitman-Yor 
kernel is open to improvement.

\subsection*{Comparison with related work}

We compare the proposed smoothed kernel with graphlet kernel and hash
kernel on the benchmark data sets in Table \ref{tab:results}. The
results for the Shortest Path and Random Walk graph kernels are included
mainly to show what is the state-of-the-art using other
representations. We fixed the $k=5$ which is observed to be the best
value for the MLE based graphlet kernel on most datasets. We randomly
sample 10,000 graphlets from each graph and feed the same frequency
vectors to graphlet kernel (GK), hash kernel (HK), Kneser-Ney smoothed
kernel (KN), and Pitman-Yor smoothed kernel (PYP). Therefore, the
differences in performance that we observe are solely due to the
transformation of the frequency vectors that these kernels perform. We
performed an unpaired $t$-test and use bold numbers to indicate that the
results were statistically significant at $p < 0.0001$.

We can see that KN kernel outperforms MLE and hash kernels on all of the
benchmark data sets. The accuracy of the PYP kernel is usually lower
than that of the KN kernel. We conjecture that this is because the
Pitman-Yor process is sensitive to the hyper-parameters and we do not
carefully tune the hyper-parameters in our experiments. On PTC, DD and
Enzymes datasets, the KN kernel reached the highest accuracy. On MUTAG,
NCI1 and NCI109 datasets, KN kernels also yield good results and got
comparable classification accuracies to shortest path and random walk
kernels. For the DD dataset, shortest path and random walk kernels were
not able to finish in 24 hours, due to the fact that this dataset has a
large maximum degree.

To summarize, smoothed kernels turns out to be competitive in terms of
classification accuracy on all datasets and are also applicable to very
large graphs.

\subsection*{Effect of exhaustive sampling on performance}

Next, we investigate whether any of the difference in performance can be
attributed to sampling a small number of graphlets. In other words, we
ask do the results summarily change if we performed exhaustive sampling
instead of using 10,000 samples. We give MLE an unfair advantage by
performing exhaustive sampling on MUTAG, PTC, NCI and NCI109 datasets
for $k=5$ by using a distributed memory implementation. Table
\ref{table-bruteforcesampling} shows mean, median and standard
deviations of number of samples in bruteforce sampled datasets for
$k=5$. Here, we can see that the original frequencies of the graphlets
are quite high in most of the datasets. Even though our algorithm only
uses 10,000 samples, it outperforms the graphlet kernel with exhaustive
sampling on MUTAG, PTC and NCI1 and achieves competitive performance on
NCI109 dataset. Results are summarized in Table
\ref{tab:results-bruteforce}.

% Therefore, our algorithm is applicable to
% very large graphs and capable of achieving competitive performance with
% a small number of samples.

Even though distribution with a small number of samples is close to the
original distribution in the $L_{1}$ sense, bruteforce sampling reveals
that the true underlying distribution of the datasets contains a larger
number of unique graphlets comparing to random sampling. Since the graphlet
kernel uses a MLE estimate its performance degrades. On the other hand,
our smoothing technique uses structural information to redistribute the
mass and hence is able to outperform MLE even with a small number of
samples. 

\begin{table}[t]
  \caption{Mean, Std and Median number of size $k=5$ graphlets per graph.}
  \label{table-bruteforcesampling}
  \begin{center}
    \begin{tabular}{|c|c|c|c|l|}\hline
      Dataset & Mean  & Std & Median \\ \hline  
      NCI109 & 1128404.0613 & 5330478.99592 & 65780.0 \\ \hline
      NCI1 & 1164713.35912 & 5896578.49179 & 80730.0 \\ \hline
      PTC & 732605.578488 & 3243397.46737 & 26334.0 \\ \hline
      MUTAG & 16221.1702128 & 19687.3766842 & 7378.0 \\  
      \hline
    \end{tabular}
  \end{center}
\end{table}

% Graphlet Kernel paper discusses that random sampling will
% converge to true distribution in L1 sense, bruteforce sampling reveals
% that the true underlying distribution of the datasets capture a larger
% number of unique graphlets comparing to random sampling. Since our
% smoothed kernel distributes the mass in structural coherent way, it is
% able to outperform MLE since it distributes the mass.

% To summarize, smoothed kernel turned out to be competitive in terms of
% classification accuracy on all datasets and applicable to very large
% graphs since they only make use of a limited number of samples.

\section{Discussion}
\label{sec:Discussion}

We presented a novel framework for smoothing normalized
graphlet-frequency vectors inspired by smoothing techniques from natural
language processing. Although our models are inspired by work done in
language models, they are fundamentally different in the way they define
a fallback base distribution. We believe that our framework has
applicability beyond graph kernels, and can be used in any structural
setting where one can naturally define a relationship such as the DAG
that we defined in Figure \ref{fig:dag}. We are currently investigating
the applicability of our framework to string kernels
\citep{VisSmo03,LesEskNob02,LesKua03} and tree kernels
\citep{ColDuf01}. It is also interesting to investigate if our method
can be extended to other graph kernels such as random walk kernels.  Our
framework is also applicable to node-labeled graphs since they also
suffer from similar sparsity issues. We leave the application of our
framework to labelled graphs to an extended version of this paper.  We
are also investigating better strategies for tuning the hyper-parameters
of the Pitman-Yor kernels.

\bibliography{graphlet_paper}
\end{document}